\let\NAT@parse\undefined
	\newtheoremstyle{myplain}
	  {}
	  {}
	  {\itshape}
	  {}
	  {\bfseries}
	  {}
	  {5pt plus 1pt minus 1pt}
	  {}
	\newtheoremstyle{mydefinition}
	  {}
	  {}
	  {\normalfont}
	  {}
	  {\bfseries}
	  {}
	  {5pt plus 1pt minus 1pt}
	  {}
	\theoremstyle{myplain}
	\newtheorem{proposition}{Proposition}
	\theoremstyle{mydefinition}
	\newtheorem{definition}{Definition}
   \newcommand{\field}[1]{\mathbb{#1}}
   \newcommand{\reals}{\field{R}}
\newcommand{\robotradius}{r} 
\newcommand{\sensorrange}{R} 
\newcommand{\robotposition}{\mathbf{x}} 
\newcommand{\robotpositionunicycle}{\overline{\mathbf{x}}} 
\newcommand{\robotorientation}{\psi} 
\newcommand{\goalposition}{\mathbf{x}^*} 
\newcommand{\workspace}{\mathcal{W}} 
\newcommand{\enclosingfreespace}{\mathcal{F}_e} 
\newcommand{\freespace}{\mathcal{F}} 
\newcommand{\obstacleset}{\tilde{\mathcal{O}}} 
\newcommand{\obstaclesetdilated}{\mathcal{O}} 
\newcommand{\knownobstacleset}{\tilde{\mathcal{P}}} 
\newcommand{\knownobstaclesetdilated}{\mathcal{P}} 
\newcommand{\unknownobstacleset}{\tilde{\mathcal{C}}} 
\newcommand{\unknownobstaclesetdilated}{\mathcal{C}} 
\newcommand{\movableobject}{\tilde{M}} 
\newcommand{\movableobjectset}{\tilde{\mathcal{M}}} 
\newcommand{\movableobjectdilated}{M} 
\newcommand{\movableobjectgoal}{\mathbf{x}^*} 
\newcommand{\movableobjectsetdilated}{\mathcal{M}} 
\newcommand{\movableobjectcardinality}{N_M} 
\newcommand{\movableobjectradius}[1]{\rho_{#1}} 
\newcommand{\diffeogeneric}{\mathbf{h}} 
\newcommand{\diffeo}{\mathbf{h}^{\hybridmode}} 
\newcommand{\controlfullyactuated}{\mathbf{u}} 
\newcommand{\controlunicycle}{\overline{\controlfullyactuated}} 
\newcommand{\linearinput}{v} 
\newcommand{\angularinput}{\omega} 
\newcommand{\hybridmode}{\mathcal{I}} 
\newcommand{\ballclosure}[2]{\overline{\mathsf{B}(#1,#2)}}
\title{\LARGE \bf
Technical Report: Reactive Planning for Mobile Manipulation Tasks\\ in Unexplored Semantic Environments
}
\author{Vasileios Vasilopoulos*, Yiannis Kantaros*, George J. Pappas and Daniel E. Koditschek
\thanks{* Equal contribution.}%
\thanks{The authors are with the GRASP Lab, University of Pennsylvania, Philadelphia, PA 19104.
        {\tt\small \{vvasilo, kantaros, pappasg, kod\}@seas.upenn.edu}}%
\thanks{This work was supported by AFRL grant FA865015D1845 (subcontract 669737-1), AFOSR grant FA9550-19-1-0265 (Assured Autonomy in Contested Environments), and ONR grant \#N00014-16-1-2817, a Vannevar Bush Fellowship held by the last author, sponsored by the Basic Research Office of the Assistant Secretary of Defense for Research and Engineering.}%
}
\begin{document}

\maketitle
\thispagestyle{empty}
\pagestyle{empty}

\begin{abstract}
Complex manipulation tasks, such as rearrangement planning of numerous objects, are combinatorially hard problems. Existing algorithms either do not scale well or assume a great deal of prior knowledge about the environment, and few offer any rigorous guarantees. In this paper, we propose a novel hybrid control architecture for achieving such tasks with mobile manipulators. On the discrete side, we enrich a temporal logic specification with mobile manipulation primitives such as moving to a point, and grasping or moving an object. Such specifications are translated to an automaton representation, which orchestrates the physical grounding of the task to mobility or manipulation controllers. The grounding from the discrete to the continuous reactive controller is online and can respond to the discovery of unknown obstacles or decide to push out of the way movable objects that prohibit task accomplishment. Despite the problem complexity, we prove that, under specific conditions, our architecture enjoys provable completeness on the discrete side, provable termination on the continuous side, and avoids all obstacles in the environment. Simulations illustrate the efficiency of our architecture that can handle tasks of increased complexity while also responding to unknown obstacles or unanticipated adverse configurations.
\end{abstract}


\section{INTRODUCTION}
\label{sec:introduction}

\begin{figure}[t]
\captionsetup{width=\linewidth,font=footnotesize}
\centering
\includegraphics[width=0.7\columnwidth]{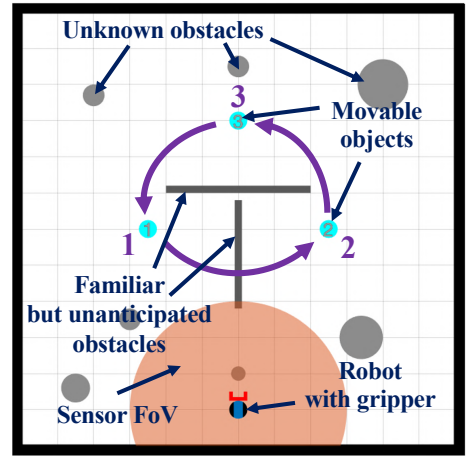}
\caption{An example of a task considered in this paper, whose execution is depicted in Fig.~\ref{fig:simulation_rearrangement}. A differential drive robot, equipped with a gripper (red) and a limited range onboard sensor for localizing obstacles (orange), needs to accomplish a mobile manipulation task specified by a Linear Temporal Logic (LTL) formula, in a partially known environment (black), cluttered with both unanticipated (dark grey) and completely unknown (light grey) fixed obstacles. 
Here the task is to rearrange the movable objects  counterclockwise, in the presence of the fixed obstacles. Objects' abstract locations (relative to abstract, named regions of the workspace) are known by the symbolic controller both \`a-priori and during the entire task sequence.  Geometrically complicated obstacles are assumed to be familiar but unanticipated in the sense that neither their number nor placement are known in advance. Completely unknown obstacles are presumed to be convex. All obstacles and disconnected configurations caused by the movable objects are handled by the reactive vector field motion planner (Fig.~\ref{fig:system}) and never reported to the symbolic controller.
}
\label{fig:front_figure}
\vspace{-18pt}
\end{figure}

Task and motion planning for complex manipulation tasks, such as rearrangement planning of multiple objects, has recently received increasing attention \cite{garrett_ijrr_2018,vega-brown2016asymptotically,Krontiris-RSS-15}. However, existing algorithms are typically combinatorially hard and do not scale well, while they also focus mostly on \textit{known} environments \cite{srivastava2014combined,he2015towards}. As a result, such methods cannot be applied to scenarios where the environment is initially unknown or needs to be reconfigured to accomplish the assigned mission and, therefore, online replanning may be required \cite{Kaelbling2011}, resulting in limited applicability.

Instead, we propose an architecture for addressing complex mobile manipulation task planning problems which can handle unanticipated conditions in unknown environments. Fig.~\ref{fig:front_figure} illustrates the problem domain and scope of our algorithm. Fig.~\ref{fig:system} illustrates the structure of the architecture and the organization of the paper. Our extensive simulation study suggests that this formal interface between a symbolic logic task planner and a physically grounded dynamical controller - each endowed with their own formal properties - can achieve computationally efficient rearrangement of complicated workspaces, motivating work now in progress to give conditions under which their interaction can guarantee success.

\begin{figure}[t]
\captionsetup{width=1.0\linewidth,font=footnotesize}
\centering
\includegraphics[width=1.0\linewidth]{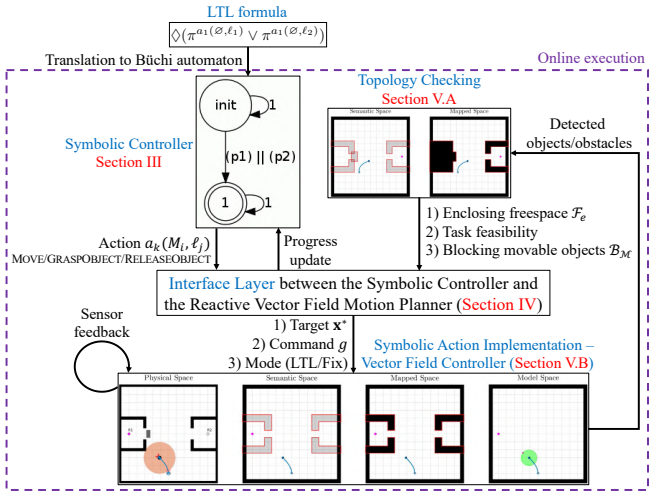}
\caption{System architecture: The task is encoded in an LTL formula, translated offline to a B\"uchi automaton (symbolic controller - Section~\ref{sec:ltl_planner}). Then, during execution time in a previously unexplored semantic environment, each individual sub-task provided by the B\"uchi automaton is translated to a point navigation task toward a target $\goalposition$ and a gripper command $g$, through an interface layer (Section~\ref{sec:logic_reactive_interface}). This task is executed online by realizing each symbolic action (Section~\ref{subsec:action_implementation}) using a reactive, vector field motion planner (continuous-time controller, \cite{vasilopoulos_pavlakos_bowman_caporale_daniilidis_pappas_koditschek_2020}) implementing closed-loop navigation using sensor feedback and working closely with a topology checking module (Section~\ref{subsec:topology_checking}), responsible for detecting freespace disconnections. The reactive controller, nominally in an {\it LTL mode}, guarantees collision avoidance and target convergence when both the initial and the target configuration lie in the same freespace component. On the other hand, if the topology checking module determines that the target is not reachable, the reactive controller either attempts to connect the disconnected configuration space by switching to a {\it Fix mode} and interacting with the environment to rearrange blocking movable objects, or the interface layer reports failure to the symbolic controller when this is impossible and requests an alternative action.}
\label{fig:system}
\vspace{-14pt}
\end{figure}

\subsection{Mobile Manipulation of Movable Objects}
Planning the rearrangement of movable objects has long been known to be algorithmically hard (PSPACE-hardness was established in \cite{Hopcroft_Schwartz_Sharir_1984}), and most approaches have focused on simplified instances of the problem. For example, past work on reactive rearrangement using vector field planners such as navigation functions \cite{rimon1992} assumes either that each object is actuated \cite{whitcomb_koditschek_cabrera,Karagoz_Bozma_Koditschek_2014} or that there are no other obstacles in the environment \cite{Bozma_Koditschek_2001,Karagoz_Bozma_Koditschek_2004,Arslan_Guralnik_Koditschek_2016}. When considering more complicated workspaces, most approaches focus either on sampling-based methods that empirically work well \cite{vandenBerg2010}, motivated by the typically high dimensional configuration spaces arising from combined task and motion planning \cite{garrett_ijrr_2018,Krontiris-RSS-15}, or learning a symbolic language on the fly \cite{konidaris2018}. Such methods can achieve asymptotic optimality \cite{vegabrown_2018} by leveraging tools for efficient search on large graphs \cite{Wolfe2010}, but come with no guarantee of task completion under partial prior knowledge and their search time grows exponentially with the number of movable pieces \cite{vegabrown_rss2017}. Sampling-based approaches have also been applied to the problem of navigation among movable obstacles (NAMO) \cite{stilman_kuffner_2006}, where the robot needs to grasp and move obstacles in order to connect disconnected components of the configuration space, with recent extensions focusing on heuristics for manipulation planning in unknown environments \cite{wu_levihn_stilman_2010,levihn_scholz_stilman_2013}. Unlike such methods that require constant deliberative replanning in the presence of unanticipated conditions, this work examines the use of a reactive vector field controller, with simultaneous guarantees of convergence and obstacle avoidance in partially known environments \cite{vasilopoulos_pavlakos_bowman_caporale_daniilidis_pappas_koditschek_2020}, endowed with a narrow symbolic interface to the abstract reactive temporal logic planner whose freedom from any consideration of geometric details affords decisive computational advantage in supervising the task.

\subsection{Reactive Temporal Logic Planning}
Reactive temporal logic planning algorithms that can account for  environmental uncertainty in terms of incomplete environment models have been developed  in \cite{guo2013revising,guo2015multi,maly2013iterative,lahijanian2016iterative,livingston2012backtracking,livingston2013patching,kress2009temporal,alonso2018reactive}. Particularly, \cite{guo2013revising,guo2015multi} model the environment as a transition system which is partially known. Then, a discrete controller is designed by applying graph search methods on a product automaton. As the environment, i.e., the transition system, is updated, the product automaton is locally updated as well, and new paths are re-designed by applying graph search approaches on the revised automaton. A conceptually similar approach is proposed in \cite{maly2013iterative,lahijanian2016iterative}. The works in \cite{livingston2012backtracking,livingston2013patching} propose methods to locally patch paths, as the transition system (modeling the environment) changes so that GR(1) (General Reactivity of Rank 1) specifications \cite{piterman2006synthesis} are satisfied. Reactive to LTL specifications planning algorithms are proposed in \cite{kress2009temporal,alonso2018reactive}, as well. Specifically, in \cite{kress2009temporal,alonso2018reactive} the robot reacts to the environment while the task specification captures this reactivity. Correctness of these algorithms is guaranteed if the robot operates in an environment that satisfies the assumptions that were explicitly modeled in the task specification. Common in all these works is that, unlike our approach, they rely on discrete abstractions of the robot dynamics  \cite{belta2005discrete,pola2008approximately} while active interaction with the environment to satisfy the logic specification is neglected.

\subsection{Contributions and Organization of the Paper}

This paper introduces the first planning and control architecture to provide a formal interface between an abstract temporal logic engine and a physically grounded mobile manipulation vector field planner for the rearrangement of movable objects in partially known workspaces cluttered with unknown obstacles. We provide conditions under which the symbolic controller is complete (Proposition~\ref{proposition:completeness}), while exploiting prior results \cite{vasilopoulos_pavlakos_bowman_caporale_daniilidis_pappas_koditschek_2020} that guarantee safe physical achievement of its sub-tasks when they are feasible, and introduce a new heuristic vector field controller for greedy physical rearrangement of the workspace when they are not. We provide a variety of simulation examples that illustrate the efficacy of the proposed algorithm for accomplishing complex manipulation tasks in unknown environments.

The paper is organized as follows. After formulating the problem in Section~\ref{sec:problem_formulation}, Section~\ref{sec:ltl_planner} presents a discrete controller which given an LTL specification generates on-the-fly high-level manipulation primitives, translated to point navigation commands through an interface layer outlined in Section~\ref{sec:logic_reactive_interface}. Using this interface, Section~\ref{sec:reactive_planner} continues with the reactive implementation of our symbolic actions and the employed algorithm for connecting disconnected freespace components blocked by movable objects. Section~\ref{sec:simulations} discusses our numerical results and, finally, Section~\ref{sec:conclusion} concludes with ideas for future work.
\section{PROBLEM DESCRIPTION}
\label{sec:problem_formulation}
\subsection{Model of the Robot and the Environment}
We consider a first-order, nonholonomically-constrained, disk-shaped robot, centered at $\robotposition \in \reals^2$ with radius $\robotradius \in \reals_{>0}$ and orientation $\robotorientation \in S^1$; its rigid placement is denoted by $\robotpositionunicycle := (\robotposition,\robotorientation) \in \mathbb{R}^2 \times S^1$ and its input vector $\controlunicycle:=(\linearinput,\angularinput)$ consists of a fore-aft and an angular velocity command. The robot uses a gripper to move disk-shaped {\it movable objects} of known location, denoted by $\movableobjectset:=\{\movableobject_i\}_{i \in \{1,\ldots,\movableobjectcardinality\}}$ , with a vector of radii $(\movableobjectradius{1}, \ldots, \movableobjectradius{\movableobjectcardinality}) \in \reals^{\movableobjectcardinality}$, in a closed, compact, polygonal, typically non-convex workspace $\workspace \subset \reals^2$. The robot's gripper $g$ can either be engaged ($g=1$) or disengaged ($g=0$). Moreover, we adopt the perceptual model of our recent physical implementations \cite{vasilopoulos_pavlakos_bowman_caporale_daniilidis_pappas_koditschek_2020,vasilopoulos_pavlakos_schmeckpeper_daniilidis_koditschek_2019} whereby a sensor of range $\sensorrange$ recognizes and instantaneously localizes any fixed ``familiar'' or ``unfamiliar'' obstacles; see also Fig.~\ref{fig:front_figure}.

The workspace is cluttered by a finite collection of disjoint obstacles of unknown number and placement, denoted by $\obstacleset$. This set might also include non-convex ``intrusions'' of the boundary of the physical workspace $\workspace$ into the convex hull of the closure of the workspace $\workspace$, defined as the {\it enclosing workspace}. As in \cite{Arslan_Koditschek_2018,vasilopoulos_pavlakos_schmeckpeper_daniilidis_koditschek_2019,vasilopoulos_pavlakos_bowman_caporale_daniilidis_pappas_koditschek_2020}, we define the \textit{freespace} $\freespace$ as the set of collision-free placements for the closed ball $\ballclosure{\robotposition}{\robotradius}$ centered at $\robotposition$ with radius $\robotradius$, and the \textit{enclosing freespace}, $\enclosingfreespace$, as $\enclosingfreespace := \left\{ \robotposition \in \mathbb{R}^2 \, | \, \robotposition \in \text{Conv}(\overline{\freespace}) \right\}$.

Although none of the positions of any obstacles in $\obstacleset$ are \`{a}-priori known, a subset $\knownobstacleset \subseteq \obstacleset$ of these obstacles is assumed to be ``familiar'' in the sense of having a recognizable polygonal geometry, that the robot can instantly identify and localize (as we have implemented in \cite{vasilopoulos_pavlakos_bowman_caporale_daniilidis_pappas_koditschek_2020,vasilopoulos_pavlakos_schmeckpeper_daniilidis_koditschek_2019}). The remaining obstacles in $\unknownobstacleset:=\obstacleset\backslash\knownobstacleset$ are assumed to be strongly convex according to \cite[Assumption 2]{Arslan_Koditschek_2018} (and implemented in \cite{vasilopoulos_pavlakos_bowman_caporale_daniilidis_pappas_koditschek_2020,vasilopoulos_pavlakos_schmeckpeper_daniilidis_koditschek_2019}), but are otherwise completely unknown.

To simplify the notation, we dilate each obstacle and movable object by $\robotradius$ (or $\robotradius + \movableobjectradius{i}$ when the robot carries an object $i$), and assume that the robot operates in the freespace $\mathcal{F}$. We denote the set of dilated objects and obstacles derived from $\movableobjectset, \obstacleset, \knownobstacleset$ and $\unknownobstacleset$, by $\movableobjectsetdilated, \obstaclesetdilated, \knownobstaclesetdilated$ and $\unknownobstaclesetdilated$ respectively. For our formal results, we assume that each obstacle in $\unknownobstaclesetdilated$ is always well-separated from all other obstacles in both $\unknownobstaclesetdilated$ and $\knownobstaclesetdilated$, as outlined in \cite[Assumption 1]{vasilopoulos_pavlakos_bowman_caporale_daniilidis_pappas_koditschek_2020}; in practice, the surrounding environment often violates our separation assumptions, without precluding successful task completion.

\subsection{Specifying Complex Manipulation Tasks}
The robot needs to accomplish a mobile manipulation task, by visiting known regions of interest $\ell_j\subseteq\workspace$, where $j\in\{1,\dots,L\}$, for some $L>0$, and applying one of the following three manipulation actions $a_k(\movableobjectdilated_i,\ell_j)\in\ccalA$, with $\movableobjectdilated_i \in \movableobjectsetdilated$ referring to a movable object, defined as follows: 
\begin{itemize}
\item $\textsc{Move}(\ell_j)$ instructing the robot to move to region $\ell_j$, labeled as $a_1(\varnothing,\ell_j)$, where $\varnothing$ means that this action does not logically entail interaction with any specific movable object\footnote{Although, as will be detailed in Section~\ref{sec:reactive_planner}, the hybrid reactive controller may actually need to move objects out of the way, rearranging the topology of the workspace in a manner hidden from the logical task controller.}.
\item $\textsc{GraspObject}(\movableobjectdilated_i)$ instructing the robot to grasp the movable object $\movableobjectdilated_i$, labeled as $a_2(\movableobjectdilated_i, \varnothing)$, with $\varnothing$ denoting that no region is associated with this action.
\item $\textsc{ReleaseObject}(\movableobjectdilated_i,\ell_j)$ instructing the robot to push the (assumed already grasped) object $\movableobjectdilated_i$ toward its designated goal position, $\ell_j$, labeled as $a_3(\movableobjectdilated_i,\ell_j)$.
\end{itemize}

For instance, consider a rearrangement planning scenario where the locations of three objects of interest need to be rearranged, as in Fig.~\ref{fig:front_figure}. We capture such complex manipulation tasks via Linear Temporal Logic (LTL) specifications. Specifically, we use atomic predicates of the form $\pi^{a_k(\movableobjectdilated_i,\ell_j)}$, which are true when the robot applies the action $a_k(\movableobjectdilated_i,\ell_j)$ and false until the robot achieves that action. Note that these atomic predicates allow us to specify temporal logic specifications defined over manipulation primitives and, unlike related works \cite{he2015towards,shoukry2018smc}, are entirely agnostic to the geometry of the environment.
We define LTL formulas by collecting such predicates in a set $\mathcal{AP}$ of atomic propositions. For example, the rearrangement planning scenario with three movable objects initially located in regions $\ell_1$, $\ell_2$, and $\ell_3$, as shown in Fig.~\ref{fig:front_figure}, can be described as a sequencing task \cite{fainekos2005hybrid} by the following LTL formula:
\begin{align}
    \phi =& \lozenge (\pi^{a_2(\movableobjectdilated_1,\varnothing)} \wedge \lozenge (\pi^{a_3(\movableobjectdilated_1,\ell_2)} \wedge\nonumber\\ 
    &\lozenge
    (\pi^{a_2(\movableobjectdilated_2,\varnothing)} \wedge \lozenge \pi^{a_3(\movableobjectdilated_2,\ell_3)} 
    \wedge \nonumber\\ & \lozenge(\pi^{a_2(\movableobjectdilated_3,\varnothing)} \wedge \lozenge \pi^{a_3(\movableobjectdilated_3,\ell_1)})
    )))
    \label{eq:ex_task}
\end{align}
where $\lozenge$ and  $\wedge$ refer to the `eventually' and `AND' operator. In particular, this task requires the robot to perform the following steps in this order (i) grasp object $\movableobjectdilated_1$ and release it in location $\ell_2$ (first line in \eqref{eq:ex_task}); (ii) then grasp object $\movableobjectdilated_2$ and release it in location $\ell_3$ (second line in \eqref{eq:ex_task}); (iii) grasp object $\movableobjectdilated_3$ and release it in location $\ell_1$ (third line in \eqref{eq:ex_task}).
LTL formulas are satisfied over an infinite sequence of states \cite{baier2008principles}. Unlike related works where a state is defined to be the robot position, e.g., \cite{kress2009temporal}, here a state is defined by the manipulation action $a_k(\movableobjectdilated_i,\ell_j)$ that the robot applies. In other words, an LTL formula defined over manipulation-based predicates $\pi^{a_k(\movableobjectdilated_i,\ell_j)}$ is satisfied by an infinite sequence of actions $p=p_0,p_1,\dots,p_n,\dots$, where $p_n\in\ccalA$, for all $n\geq0$ \cite{baier2008principles}.
Given a sequence $p$, the syntax and semantics of LTL can be found in \cite{baier2008principles}; hereafter, we exclude the `next' operator from the syntax, since it is not meaningful for practical robotics applications \cite{kloetzer2008fully}, as well as the negation operator\footnote{Since the negation operator is excluded, safety requirements, such as obstacle avoidance, cannot be captured by the LTL formula; nevertheless, the proposed method can still handle safety constraints by construction of the (continuous-time) reactive, vector field controller in Section~\ref{sec:reactive_planner}.}.


\subsection{Problem Statement}

Given a task specification captured by an LTL formula $\phi$, our goal is to (i) generate online, as the robot discovers the environment via sensor feedback, appropriate actions using the (discrete) symbolic controller, (ii) translate them to point navigation tasks, (iii) execute these navigation tasks and apply the desired manipulation actions with a (continuous-time) vector field controller, while avoiding unknown and familiar obstacles, (iv) be able to online detect freespace disconnections that prohibit successful action completion, and (v) either locally amend the provided plan by disassembling blocking movable objects, or report failure to the symbolic controller and request an alternative action.

\section{SYMBOLIC CONTROLLER}
\label{sec:ltl_planner}
%
In this Section, we design a discrete controller that generates manipulation commands online in the form of the actions defined in Section~\ref{sec:problem_formulation} (see Fig.~\ref{fig:system}), and describe the manner in which this symbolic controller extends prior work \cite{kantaros2020reactive} to account for manipulation-based atomic predicates. A detailed construction is included in Appendix~\ref{appendix:ltl_planner}.


\subsection{Construction of the Symbolic Controller}\label{sec:nba}

First, we translate the specification $\phi$, constructed using a set of atomic predicates $\mathcal{AP}$, into a Non-deterministic B$\ddot{\text{u}}$chi Automaton (NBA) with state-space and transitions among states that can be used to measure how much progress the robot has made in terms of accomplishing the assigned mission; see Appendix~\ref{appendix:ltl_planner}.  
%
%
Particularly, we define a distance metric over this NBA state-space to compute how far the robot is from accomplishing its assigned task, or more formally, from satisfying the accepting condition of the NBA, by following a similar analysis as in \cite{kantaros2018text,kantaros2020reactive}. This metric is used to generate manipulation commands online as the robot navigates the unknown environment. The main idea is that, given its current NBA state, the robot should reach a next NBA state that decreases the distance to a state that accomplishes the assigned task. Once this target NBA state is selected, a symbolic action that achieves it is generated, in the form of a manipulation action presented in Section~\ref{sec:problem_formulation} (e.g., `release the movable object $\movableobjectdilated_i$ in region $\ell_j$'). This symbolic action acts as an input to the reactive, continuous-time controller (see Section~\ref{sec:reactive_planner}), that handles obstacles and unanticipated conditions.

When the assigned sub-task is accomplished, a new target automaton state is selected and a new manipulation command is generated. If the continuous-time controller fails to accomplish the sub-task (because e.g., a target is surrounded by fixed obstacles), the symbolic controller checks if there exists an alternative command that ensures reachability of the target automaton state (e.g., consider a case where a given NBA state can be reached if the robot goes to either region $\ell_1$ or $\ell_2$). If there are no alternative commands to reach the desired automaton state, then a new target automaton state that also decreases the distance to satisfying the accepting NBA condition is selected. If there are no such other automaton states, a message is returned stating that the robot cannot accomplish the assigned mission. A detailed description for the construction of this distance metric is provided in Appendix~\ref{appendix:ltl_planner}.

\subsection{Completeness of the Symbolic Controller}
Here, we provide conditions under which the proposed discrete controller is complete. The proof for the following Proposition can be found in Appendix~\ref{appendix:ltl_planner}.
\begin{proposition}[Completeness]\label{proposition:completeness}
Assume that there exists at least one infinite sequence of manipulation actions in the set $\ccalA$ that satisfies $\phi$. 
%
If the environmental structure and the continuous-time controller always ensure that at least one of the candidate next NBA states can be reached, then the proposed discrete algorithm is complete, i.e., a feasible solution will be found.\footnote{Given the current NBA state, denoted by $q_B(t)$, the symbolic controller selects as the next NBA state, a state that is reachable from $q_B(t)$ and closer to the final states as per the proposed distance metric; see also Appendix~\ref{appendix:ltl_planner}. All NBA states that satisfy this condition are called candidate next NBA states. Also, reaching an NBA state means that at least one of the manipulation actions required to enable the transition from $q_B(t)$ to the next NBA state is feasible.}
\end{proposition}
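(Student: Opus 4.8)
The plan is to reduce the completeness claim to a graph-theoretic progress argument on the NBA equipped with its shortest-path distance metric. First I would recall the construction of the distance metric $d(\cdot)$ over the NBA state-space, following \cite{kantaros2018text,kantaros2020reactive}: for an NBA state $q_B$, $d(q_B)$ is the length of a shortest path, through transitions enabled by \emph{feasible} manipulation actions (i.e.\ transitions whose enabling predicate conjunction is realized by some action in $\ccalA$), from $q_B$ to the set of final states that themselves lie on a feasible accepting cycle. The hypothesis that some infinite action sequence $p = p_0,p_1,\dots$ with $p_n \in \ccalA$ satisfies $\phi$ is precisely what guarantees that this target set is non-empty and that $d(q_B^0) < \infty$ for the initial NBA state $q_B^0$: the accepting run induced by $p$ has the standard lasso (prefix plus infinitely repeated suffix) structure, and every transition along it is, by definition, enabled by an action in $\ccalA$.

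Next I would establish the invariant maintained by the symbolic controller: whenever the current NBA state $q_B(t)$ satisfies $d(q_B(t)) > 0$, the set of candidate next NBA states — those reachable from $q_B(t)$ by one feasible action and strictly closer to the target set — is non-empty. This is immediate from the shortest-path definition of $d$, since a state at finite positive distance always has a neighbor one step closer along some feasible path. Combined with the standing hypothesis of the proposition, namely that the environmental structure and the continuous-time controller always allow at least one candidate next NBA state to actually be reached, this means that at every decision epoch the controller can select an action that is both physically achievable and strictly decreases $d$. I would also remark here that the controller's backtracking logic (try alternative actions reaching the same candidate state, then alternative candidate states, then report failure) never reaches the failure branch under this hypothesis, since a reachable candidate always exists.

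The progress argument then closes the proof. Because the NBA is finite, $d$ is integer-valued and bounded; since each accomplished sub-task strictly decreases $d$, the robot reaches a final NBA state after finitely many sub-tasks. To conclude that the generated infinite action sequence is accepting, I would invoke the cyclic nature of the metric: at a final state the distance is re-measured toward the \emph{next} revisit of the final set along a feasible accepting cycle, so the same descent argument applies repeatedly and a final state is visited infinitely often. Hence the run of the NBA on the generated action sequence satisfies the B\"uchi acceptance condition, so this sequence satisfies $\phi$, which is exactly the claimed completeness.

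The main obstacle I anticipate is this last step — arguing that greedy one-step distance descent yields an \emph{infinitely} accepting run rather than merely reaching an accepting state once. This hinges on the metric being defined with respect to returning to the accepting set (the lasso suffix), and on the fact that restricting the NBA to feasible transitions does not destroy the accepting cycle guaranteed by the satisfiability hypothesis. Some care is also needed to check that selecting \emph{any} reachable candidate next state (not one fixed choice) preserves the invariant $d < \infty$; this holds because every distance-decreasing move keeps the state on a shortest feasible path to the target set, but it should be spelled out explicitly. The remaining arguments are routine finiteness and bookkeeping.
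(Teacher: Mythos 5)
Your proposal is correct and follows essentially the same route as the paper's proof: a descent argument on the distance metric over the pruned NBA graph, where the satisfiability assumption guarantees the existence of a finite-distance path to the set of states with outgoing accepting edges, the reachability assumption guarantees each distance-decreasing step can be physically realized, and the traversal of an accepting edge at each visit to that set yields infinitely many accepting edges and hence B\"uchi acceptance. The extra care you flag about the ``infinitely often'' step and about arbitrary candidate selection preserving finite distance is handled in the paper implicitly by the definition of candidate states as exactly those one hop closer to $\ccalV_F$, so your argument matches theirs in substance.
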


\section{INTERFACE LAYER BETWEEN THE SYMBOLIC AND THE REACTIVE CONTROLLER}
\label{sec:logic_reactive_interface}

We assume that the robot is nominally in an {\it LTL mode}, where it executes sequentially the commands provided by the symbolic controller described in Section~\ref{sec:ltl_planner}. We use an {\it interface layer} between the symbolic controller and the reactive motion planner, as shown in Fig.~\ref{fig:system}, to translate each action to an appropriate gripper command ($g=0$ for $\textsc{Move}$ and $\textsc{GraspObject}$, and $g=1$ for $\textsc{ReleaseObject}$), and a navigation command toward a target $\goalposition$. If the provided action is $\textsc{Move}(\ell_j)$ or $\textsc{ReleaseObject}(\movableobjectdilated_i,\ell_j)$, we pick as $\goalposition$ the centroid of region $\ell_j$. If the action is $\textsc{GraspObject}(\movableobjectdilated_i)$, we pick as $\goalposition$ a collision-free location on the boundary of object $\movableobjectdilated_i$, contained in the freespace $\freespace$.

Consider again the example shown in Fig.~\ref{fig:front_figure}. The first step of the assembly requires the robot to move object $\movableobjectdilated_1$ to $\ell_2$ which, however, is occupied by the object $\movableobjectdilated_2$. In this case, instead of reporting that the assigned LTL formula cannot be satisfied, we allow the robot to temporarily pause the command execution from the symbolic controller, switch to a {\it Fix mode} and push object $\movableobjectdilated_2$ away from $\ell_1$, before resuming the execution of the action instructed by the symbolic controller. For plan fixing purposes, we introduce a fourth action, $\textsc{DisassembleObject}(\movableobjectdilated_i,\movableobjectgoal)$, invisible to the symbolic controller, instructing the robot to push the object $\movableobjectdilated_i$ (after it has been grasped using $\textsc{GraspObject}$) toward a position $\movableobjectgoal$ on the boundary of the freespace until specific separation conditions are satisfied. Hence, an additional responsibility of the interface layer (when in Fix mode) is to pick the next object to be grasped and disassembled from a stack of blocking movable objects $\mathcal{B}_\movableobjectsetdilated$, as well as the target $\goalposition$ of each $\textsc{DisassembleObject}$ action, until the stack $\mathcal{B}_\movableobjectsetdilated$ becomes empty\footnote{The exclusion of the negation operator from the LTL syntax, as assumed in Section~\ref{sec:problem_formulation}, guarantees that each $\textsc{DisassembleObject}$ action will {\it not} interfere with the satisfaction of the LTL formula $\phi$, e.g., the robot will not disassemble an object that should not be grasped or moved.}; see Section~\ref{subsec:action_implementation}. Note that if the robot executes a $\textsc{ReleaseObject}$ action before switching to the Fix mode, the interface layer instructs it to first disassemble the carried object $\movableobjectdilated_i$ and, after disassembling all objects in $\mathcal{B}_{\movableobjectsetdilated}$, switch back to the LTL mode by re-grasping $\movableobjectdilated_i$.

Finally, the interface layer (a) requests a new action from the symbolic controller, if the robot successfully completes the current action execution, or (b) reports that the currently executed action is infeasible and requests an alternative action, if the topology checking module outlined in Section~\ref{subsec:topology_checking} determines that the target is surrounded by fixed obstacles.
\section{SYMBOLIC ACTION IMPLEMENTATION}
\label{sec:reactive_planner}

In this Section, we describe the online implementation of our symbolic actions, assuming that the robot has already picked a target $\goalposition$ using the interface layer from Section~\ref{sec:logic_reactive_interface}. As reported above, in the LTL mode, the robot executes commands from the symbolic controller, using one of the actions $\textsc{Move}$, $\textsc{GraspObject}$ and $\textsc{ReleaseObject}$. The robot exits the LTL mode and enters the Fix mode when one or more movable objects block the target destination $\goalposition$; in this mode, it attempts to rearrange blocking movable objects using a sequence of the actions $\textsc{GraspObject}$ and $\textsc{DisassembleObject}$, before returning to the LTL mode.

The ``backbone'' of the symbolic action implementation is the reactive, vector field motion planner from prior work \cite{vasilopoulos_pavlakos_bowman_caporale_daniilidis_pappas_koditschek_2020}, allowing either a fully actuated or a differential-drive robot to provably converge to a designated fixed target while avoiding all obstacles in the environment. When the robot is gripping an object $i$, we use the method from \cite{vasilopoulos2018} for generating virtual commands for the center $\robotposition_{i,c}$ of the circumscribed disk with radius $(\movableobjectradius{i} + \robotradius)$, enclosing the robot and the object. Namely, we assume that the robot-object pair is a fully actuated particle with dynamics $\dot{\robotposition}_{i,c} = \controlfullyactuated_{i,c}(\robotposition_{i,c})$, design our control policy $\controlfullyactuated_{i,c}$ using the same vector field controller, and translate to commands $\controlunicycle = (\linearinput,\angularinput)$ for our differential drive robot as $\controlunicycle := \mathbf{T}_{i,c}(\robotorientation)^{-1} \, \controlfullyactuated_{i,c}$, with $\mathbf{T}_{i,c}(\robotorientation)$ the Jacobian of the gripping contact, i.e., $\dot{\robotposition}_{i,c} = \mathbf{T}_{i,c}(\robotorientation) \, \controlunicycle$.

This reactive controller assumes that a path to the goal always exists (i.e., the robot's freespace is path-connected), and does not consider cases where the target is blocked either by a fixed obstacle or a movable object\footnote{The possibility of an entirely unknown blocking convex obstacle is precluded by our separation assumptions in Section~\ref{sec:problem_formulation}.}. Hence, here we extend the algorithm's capabilities by providing a topology checking algorithm (Section~\ref{subsec:topology_checking}) that detects blocking movable objects or fixed obstacles, as outlined in Fig.~\ref{fig:system}. Based on these capabilities, Section~\ref{subsec:action_implementation} describes our symbolic action implementations. Appendix~\ref{appendix:reactive_controller_overview} includes a brief overview of the reactive, vector field motion planner, and Appendix~\ref{appendix:topology_checking_algorithm} includes an algorithmic outline of our topology checking algorithm.

\subsection{Topology Checking Algorithm}
\label{subsec:topology_checking}

The topology checking algorithm is used to detect freespace disconnections, update the robot's enclosing freespace $\enclosingfreespace$, and modify its action by switching to the Fix mode, if necessary. In summary, the algorithm's input is the initially assumed polygonal enclosing freespace $\enclosingfreespace$ for either the robot or the robot-object pair, along with all known dilated movable objects in $\movableobjectsetdilated$ and fixed obstacles in $\knownobstaclesetdilated_\hybridmode$ (corresponding to the index set $\hybridmode$ of localized familiar obstacles). The algorithm's output is the detected enclosing freespace $\enclosingfreespace$, used for the diffeomorphism construction in the reactive controller \cite{vasilopoulos_pavlakos_bowman_caporale_daniilidis_pappas_koditschek_2020}, along with a stack of {\it blocking movable objects} $\mathcal{B}_{\movableobjectsetdilated}$ and a Boolean indication of whether the current symbolic action is feasible. Based on this output, the robot switches to the Fix mode when the stack $\mathcal{B}_{\movableobjectsetdilated}$ becomes non-empty, and resumes execution from the symbolic controller once all movable objects in $\mathcal{B}_{\movableobjectsetdilated}$ are disassembled. A detailed description is given in Appendix~\ref{appendix:topology_checking_algorithm}.

\subsection{Action Implementation}
\label{subsec:action_implementation}

We are now ready to describe the used symbolic actions. The symbolic action $\textsc{Move}(\ell_j)$ simply uses the reactive controller to navigate to the selected target $\goalposition$, as described in Appendix~\ref{appendix:reactive_controller_overview}. Similarly, the symbolic action $\textsc{GraspObject}(\movableobjectdilated_i)$ uses the reactive controller to navigate to a collision-free location on the boundary of object $\movableobjectdilated_i$, and then aligns the robot so that its gripper faces the object, in order to get around Brockett's condition \cite{Brockett-83}. $\textsc{ReleaseObject}(\movableobjectdilated_i,\ell_j)$ uses the reactive controller to design inputs for the robot-object center $\robotposition_{i,c}$ and translates them to differential drive commands through the center's Jacobian $\mathbf{T}_{i,c}(\robotorientation)$, in order to converge to the goal $\goalposition$.

Finally, the action $\textsc{DisassembleObject}(\movableobjectdilated_i,\goalposition)$ is identical to $\textsc{ReleaseObject}$, with two important differences. First, we heuristically select as target $\goalposition$ the middle point of the edge of the polygonal freespace $\freespace$ that maximizes the distance to all other movable objects (except $\movableobjectdilated_i$) and all regions of interest $\ell_j$. Second, in order to accelerate performance and shorten the resulting trajectories, we stop the action's execution if the robot-object pair, centered at $\robotposition_{i,c}$ does not intersect any region of interest and the distance of $\robotposition_{i,c}$ from all other objects in the workspace is at least $2(\robotradius + \max_{k \in \mathcal{B}_{\movableobjectsetdilated}} \movableobjectradius{k})$, as this would imply that dropping the object in its current location would not block a next step of the disassembly process. Even though we do not yet report on formal results pertaining to the task sequence in the Fix mode, the $\textsc{DisassembleObject}$ action maintains formal results of obstacle avoidance and target convergence to a feasible $\goalposition$, using our reactive, vector field controller.
\section{ILLUSTRATIVE SIMULATIONS}
\label{sec:simulations}

\begin{figure}
\captionsetup{width=\linewidth,font=footnotesize}
\centering
\includegraphics[width=1.0\columnwidth]{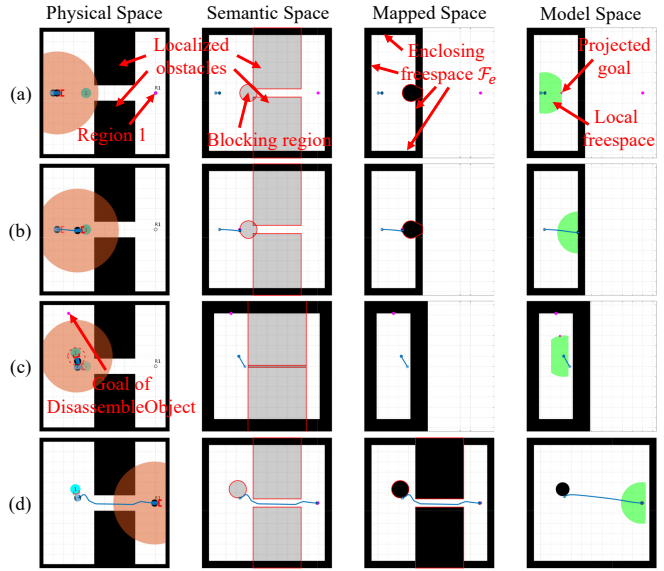}
\caption{Demonstration of local LTL plan fixing, where the task is to navigate to region 1, captured by the LTL formula $\phi = \lozenge \pi^{a_1(\varnothing,\ell_1)}$ where $\ell_1$ refers to region 1 in the figure. (a) The robot starts navigating to its target, until it localizes the two rectangular obstacles and recognizes that the only path to the goal is blocked by a movable object. (b) The robot switches to the Fix mode, grips the object, and (c) moves it away from the blocking region, until the separation assumptions outlined in Section~\ref{subsec:action_implementation} are satisfied. (d) It then proceeds to complete the task.}
\label{fig:simulation_simple}
\end{figure}

\begin{figure}[t]
\captionsetup{width=\linewidth,font=footnotesize}
\centering
\includegraphics[width=1.0\columnwidth]{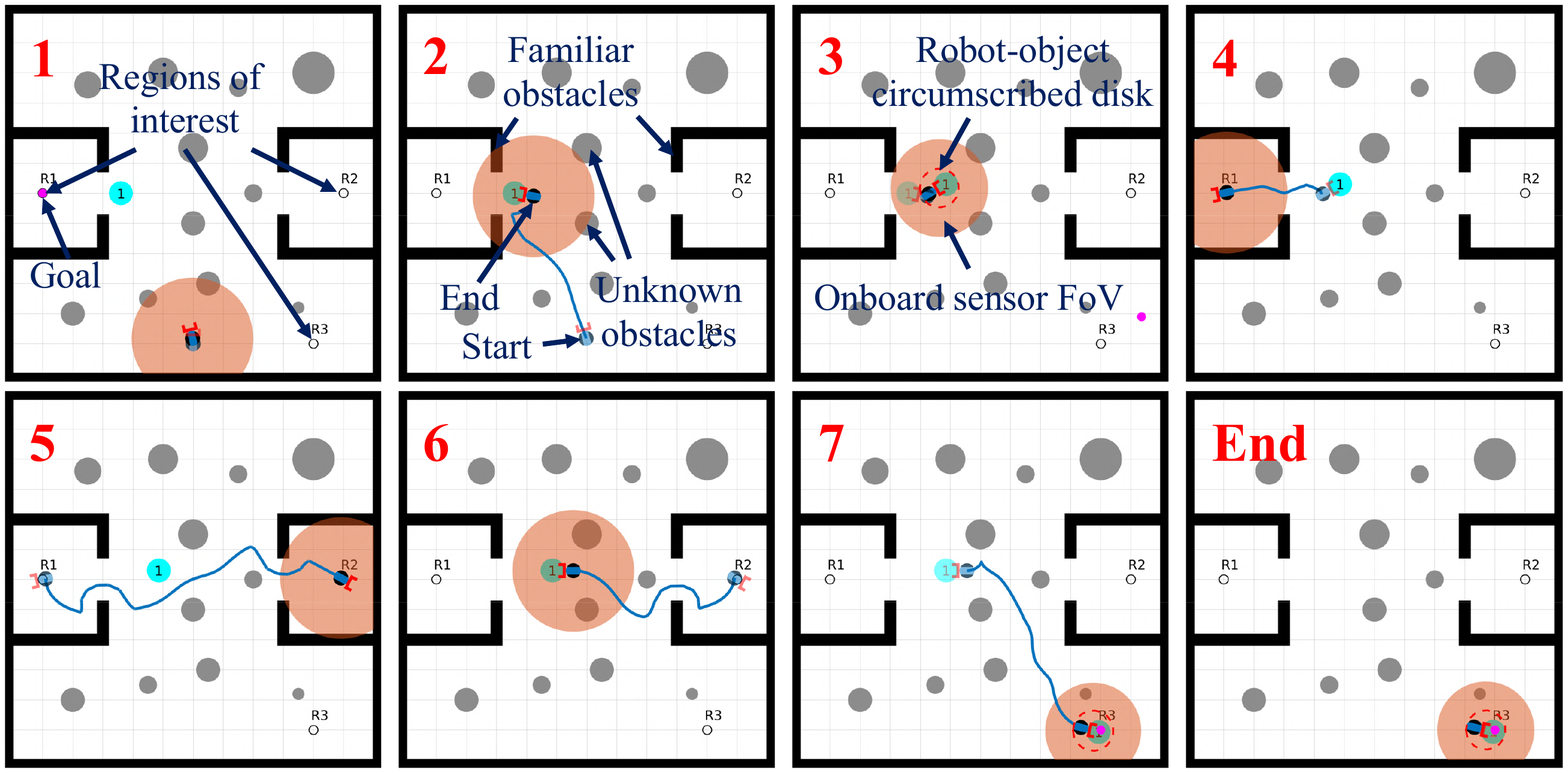}
\caption{Executing the LTL formula $\phi = \lozenge (\pi^{a_1(\varnothing,\ell_1)} \wedge \lozenge (\pi^{a_1(\varnothing,\ell_2)} \wedge \lozenge (\pi^{a_2(\movableobjectdilated_1,\varnothing)} \wedge \lozenge \pi^{a_3(\movableobjectdilated_1,\ell_3)})))$ in an environment cluttered with known walls (black) and unknown convex obstacles (grey).}
\label{fig:simulation_sequential}
\end{figure}

\begin{figure}
\captionsetup{width=\linewidth,font=footnotesize}
\centering
\includegraphics[width=0.8\columnwidth]{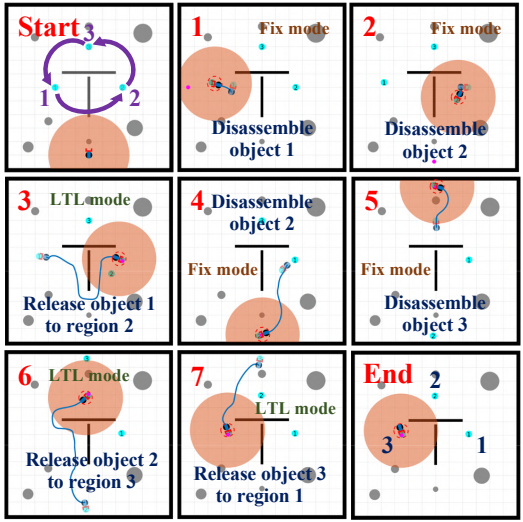}
\caption{An illustrative execution of the problem depicted in Fig.~\ref{fig:front_figure}. The task, specified by the LTL formula \eqref{eq:ex_task}, requires the counterclockwise rearrangement of 3 objects in an environment cluttered with some unanticipated familiar (initially dark grey and then black upon localization) and some completely unknown (light grey) fixed obstacles.}
\label{fig:simulation_rearrangement}
\vspace{-14pt}
\end{figure}

In this Section, we implement simulated examples of different tasks in various environments using our architecture, shown in Fig.~\ref{fig:system}. All simulations were run in MATLAB using $\texttt{ode45}$, leveraging and enhancing existing presentation infrastructure\footnote{See \url{https://github.com/KodlabPenn/semnav_matlab}.}. The discrete controller and the interface layer are implemented in MATLAB, whereas the reactive controller is implemented in Python and communicates with MATLAB using the standard MATLAB-Python interface. For our numerical results, we assume perfect robot state estimation and localization of obstacles using the onboard sensor, which can instantly identify and localize either the entirety of familiar obstacles or fragments of unknown obstacles within its range\footnote{The reader is referred to \cite{vasilopoulos_pavlakos_bowman_caporale_daniilidis_pappas_koditschek_2020,vasilopoulos_pavlakos_schmeckpeper_daniilidis_koditschek_2019} for examples of physical realization of such sensory assumptions, using a combination of an onboard camera for obstacle recognition and pose estimation, and a LIDAR sensor for extracting distance to nearby obstacles.}. The reader is referred to the accompanying video submission for visual context and additional simulations.

\subsection{Demonstration of Local LTL Plan Fixing}
Fig.~\ref{fig:simulation_simple} includes a demonstration of a simple task, encoded in the LTL formula $\phi = \lozenge \pi^{a_1(\varnothing,\ell_1)}$, i.e., eventually execute the action $\textsc{Move}$ to navigate to region 1, demonstrating how the Fix mode for local rearrangement of blocking movable objects works.

\subsection{Executing More Complex LTL Tasks}
Fig.~\ref{fig:simulation_sequential} includes successive snapshots of a more complicated LTL task, captured by the formula
\begin{equation*}
    \phi = \lozenge (\pi^{a_1(\varnothing,\ell_1)} \wedge \lozenge (\pi^{a_1(\varnothing,\ell_2)} \wedge \lozenge (\pi^{a_2(\movableobjectdilated_1,\varnothing)} \wedge \lozenge \pi^{a_3(\movableobjectdilated_1,\ell_3)})))
\end{equation*}
which instructs the robot to first navigate to region 1, then navigate to region 2, and finally grasp object 1 and move it to region 3, in an environment cluttered with both familiar non-convex and completely unknown convex obstacles. Before navigating to region 1, the robot correctly identifies that the movable object disconnects its freespace and proceeds to disassemble it. After visiting region 2, it then revisits the movable object, grasps it and moves it to the designated location to complete the required task. The reader is referred to the video submission for visual context regarding the evolution of all planning spaces \cite{vasilopoulos_pavlakos_bowman_caporale_daniilidis_pappas_koditschek_2020} (semantic, mapped and model space) during the execution of this task, as well as several other simulations with more movable objects, including (among others) a task where the robot needs to patrol between some predefined regions of interest in an environment cluttered with obstacles by visiting each one of them infinitely often.

\subsection{Execution of Rearrangement Tasks}
Finally, a promising application of our reactive architecture concerns rearrangement planning with multiple movable pieces. Traditionally, such tasks are executed using sampling-based planners, whose offline search times can blow up exponentially with the number of movable pieces in the environment (see, e.g., \cite[Table I]{vegabrown_rss2017}). Instead, as shown in Fig.~\ref{fig:simulation_rearrangement}, the persistent nature of our reactive architecture succeeds in achieving the given task online in an environment with multiple obstacles, even though our approach might require more steps and longer trajectories in the overall assembly process than other optimal algorithms \cite{vegabrown_2018}. Moreover, the LTL formulas for encoding such tasks are quite simple to write (see \eqref{eq:ex_task} for the example in Fig.~\ref{fig:simulation_rearrangement}), instructing the robot to grasp and release each object in sequence; the reactive controller is capable of handling obstacles and blocking objects during execution time. Our video submission includes a rearrangement example with 4 movable objects, requiring more steps in the assembly process.
\section{CONCLUSION}
\label{sec:conclusion}

In this paper, we propose a novel hybrid control architecture for achieving complex tasks with mobile manipulators in the presence of unanticipated obstacles and conditions. Future work will focus on providing end-to-end (instead of component-wise) correctness guarantees, extensions to multiple robots for collaborative manipulation tasks, as well as physical experimental validation.

\appendices

\section{Detailed Description of the Symbolic Controller}
\label{appendix:ltl_planner}
%
This Appendix provides a detailed description of the distance metric used in Section~\ref{sec:ltl_planner} upon which a discrete controller is designed that generates manipulation commands online. To accomplish this, first in Appendix~\ref{sec:nbaAp} we translate the LTL formula into a Non-deterministic B$\ddot{\text{u}}$chi Automaton (NBA) and we provide a formal definition of its accepting condition. 
Then, in Appendix~\ref{sec:dist}, we provide a detailed description for the construction of the distance metric over this automaton state space. Appendix~\ref{appendix:planning} describes our method for generating symbolic actions online, and Appendix~\ref{appendix:completeness} includes the proof of our completeness result. The proposed method is also illustrated in Figs.~\ref{fig:NBA}-\ref{fig:graphG}.


\subsection{From LTL Formulas to Automata}\label{sec:nbaAp}

\begin{figure}[t]
  \centering
  \captionsetup{width=\linewidth,font=footnotesize}
  \includegraphics[width=0.6\linewidth]{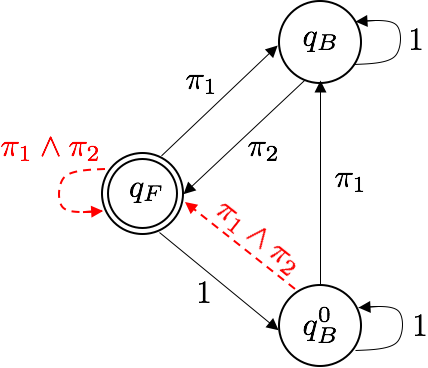}
  \caption{Graphical illustration of the NBA corresponding to the LTL formula $\phi=\square\Diamond(\pi_1)\wedge\square\Diamond(\pi_2)$ where for simplicity of notation $\pi_1=\pi^{a_1(\varnothing, \ell_1)}$ and $\pi_2=\pi^{a_1(\varnothing, \ell_2)}$. The automaton has been generated using the tool in \cite{gastin2001fast}. In words, this LTL formula requires the robot to visit infinitely often and in any order the regions $\ell_1$ and $\ell_2$. The initial state of the automaton is denoted by $q_B^0$ while the final state is denoted by $q_F$. When the robot is in an NBA state and the Boolean formula associated with an outgoing transition from this NBA state is satisfied, then this transition can be enabled. For instance, when the robot is in the initial state $q_B^0$ and satisfies the atomic predicate $\pi_1$, the transition from $q_B^0$ to $q_B$ can be enabled, i.e., $q_B\in\delta_B(q_B^0,\pi_1)$. The LTL formula is satisfied if starting from $q_B^0$, the robot generates an infinite sequence of observations (i.e., atomic predicates that become true) that yields an infinite sequence of transitions so that the final state $q_F$ is visited infinitely often. The red dashed lines correspond to infeasible NBA transitions as they are enabled only if the Boolean formula $\pi_1\wedge\pi_2$ is satisfied, i.e., only if the robot is in more than one region simultaneously; such edges are removed yielding the pruned NBA. }
  \label{fig:NBA}
\end{figure}

As reported in Section~\ref{sec:ltl_planner}, we first translate the specification $\phi$, constructed using a set of atomic predicates $\mathcal{AP}$, into a Non-deterministic B$\ddot{\text{u}}$chi Automaton (NBA) defined as follows; see also Fig.~\ref{fig:NBA}. 
 \begin{definition}[NBA]
 A Non-deterministic B$\ddot{\text{u}}$chi Automaton (NBA) $B$ over $\Sigma=2^{\mathcal{AP}}$ is defined as a tuple $B=\left(\ccalQ_{B}, \ccalQ_{B}^0,\delta_B, \ccalQ_F\right)$, where (i) $\ccalQ_{B}$ is the set of states;
 (ii) $\ccalQ_{B}^0\subseteq\ccalQ_{B}$ is a set of initial states; (iii) $\delta_B:\ccalQ_B\times\Sigma\rightarrow 2^{\ccalQ_B}$ is a non-deterministic transition relation, and $\ccalQ_F\subseteq\ccalQ_{B}$ is a set of accepting/final states.
 \label{def:nba}
 \end{definition}
To interpret a temporal logic formula over the trajectories of the robot system, we use a labeling function $L:\ccalA\rightarrow 2^{\mathcal{AP}}$ that determines which atomic propositions are true given the current robot action $a_k(\movableobjectdilated_i,\ell_j)$; note that, by definition, these actions also encapsulate the position of the robot in the environment.
%
%
%
An infinite sequence $p=p(0)p(1)\dots p(k)\dots$ of actions $p(k)\in\ccalA$, satisfies $\phi$ if the word $\sigma=L(p(0))L(p(1))\dots$ yields an accepting NBA run defined as follows \cite{baier2008principles}. First, a run $\rho_B$ of $B$ over an infinite word $\sigma=\sigma(1)\sigma(2)\dots\sigma(k)\dots\in(2^{\mathcal{AP}})^{\omega}$, is a sequence $\rho_B=q_B^0q_B^1q_B^2\dots,q_B^k,\dots$, where $q_B^0\in\ccalQ_B^0$ and $q_B^{k+1}\in\delta_B(q_B^k,\sigma(k))$, $\forall k\in\mathbb{N}$. A run $\rho_B$ is called \textit{accepting} if at least one final state appears infinitely often in it. In words, an infinite-length discrete plan $\tau$ satisfies an LTL formula $\phi$ if it can generate at least one accepting NBA run.

\subsection{Distance Metric Over the NBA}\label{sec:dist}

\begin{figure}[t]
  \centering
  \captionsetup{width=\linewidth,font=footnotesize}
  \includegraphics[width=1\linewidth]{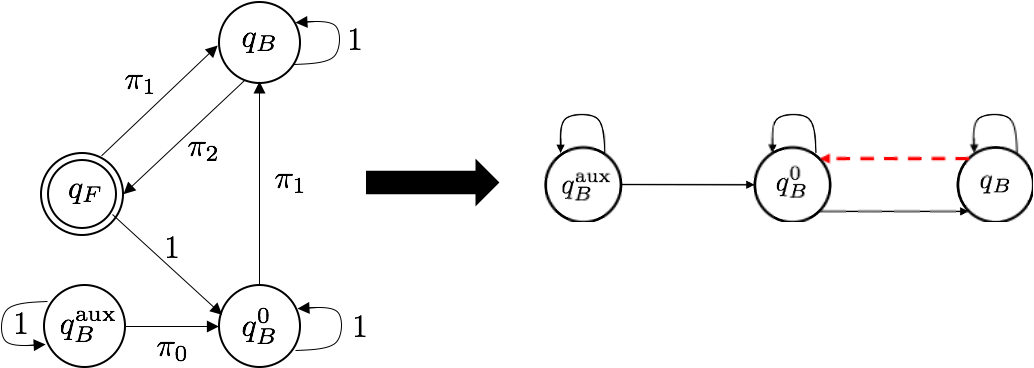}
  \caption{Graphical illustration of the graph $\ccalG$ construction for the NBA shown in Fig.~\ref{fig:NBA}. The left figure corresponds to the pruned automaton after augmenting its state space with the state $q_B^{\text{aux}}$, where $\pi_0$ corresponds to the atomic predicate that the robot satisfies initially at $t=0$. If no atomic predicates are satisfied initially, then $\pi_0$ corresponds to the empty symbol \cite{baier2008principles}. Observe in the left figure that $\ccalD_{q_B^{\text{aux}}}=\{q_B^{\text{aux}},q_B^0,q_B\}$. The right figure illustrates  the graph $\ccalG$ corresponding to this automaton. The red dashed line corresponds to an accepting edge. Also, we have that $\ccalV_F=\{q_B\}$, $d_F(q_B^{\text{aux}},\ccalV_F)=2$, $d_F(q_B^{0},\ccalV_F)=1$, and $d_F(q_B,\ccalV_F)=0$. For instance, every time the robot reaches the state $q_B^0$ with $d_F(q_B^{0},\ccalV_F)=1$, it generates a symbol to reach the state $q_B$ since reaching this state decreases the distance to the set of accepting edges (since $d_F(q_B,\ccalV_F)=0$). The symbol that can enable this transition is the symbol that satisfies the Boolean formula $b^{q_B^0,q_B}=\pi_1$; this formula is trivially satisfied by the symbol $\pi_1=\pi^{a_1(\varnothing, \ell_1)}$. As a result the command send to the continuous time controller is `Move to Region $\ell_1$'. 
  }
  \label{fig:graphG}
\end{figure}

In this Section, given a graph constructed using the NBA, we define a function to compute how far an NBA state is from the set of final states.
Following a similar analysis as in \cite{kantaros2018text,kantaros2020reactive}, we first prune the NBA by removing infeasible transitions that can never be enabled as they require the robot to be in more than one region and/or take more that one action simultaneously.  
Specifically, a symbol $\sigma\in\Sigma:=2^{\mathcal{AP}}$ is \textit{feasible} if and only if $\sigma\not\models b^{\text{inf}}$, where $b^{\text{inf}}$ is a Boolean formula defined as 
\begin{align}\label{bInfTSi}
b^{\text{inf}}=&[(\vee_{\forall k,r,j,e\neq j}(\pi^{a_k(\cdot,\ell_e)}\wedge\pi^{a_r(\cdot,\ell_j)}))]\nonumber\\
&\bigvee [(\vee_{\forall j, k,r\neq k}(\pi^{a_k(\cdot,\ell_j)}\wedge\pi^{a_r(\cdot,\ell_j)}))]
\end{align}
In words, $b^{\text{inf}}$ requires the robot to be either present simultaneously in more than one region \textit{or} take more than one action in a given region at the same time. Specifically, the first line requires the robot to be present in locations $\ell_j$ and $\ell_e$, $e\neq j$ and apply the actions $a_k,a_r\in\ccalA$ while the second line requires the robot to take two distinct actions $a_k(\cdot,\ell_j)$ and $a_r(\cdot,\ell_j)$ at the same region $\ell_j$, simultaneously.

Next, we define the sets that collect all feasible symbols that enable a transition from an NBA state $q_B$ to another, not necessarily different, NBA state $q_B'$. This definition relies on the fact that transition from a state $q_B$ to a state $q_B'$ is enabled if a Boolean formula, denoted by $b^{q_B,q_B'}$ and defined over the set of atomic predicates $\mathcal{AP}$, is satisfied. In other words, $q_B'\in\delta_B(q_B,\sigma)$, i.e., $q_B'$ can be reached from the NBA state $q_B$ under the symbol $\sigma$, if $\sigma$  satisfies $b^{q_B,q_B'}$. An NBA transition from $q_B$ to $q_B'$ is infeasible if there are no feasible symbols that satisfy $b^{q_B,q_B'}$. All infeasible NBA transitions are removed yielding a pruned NBA automaton. All feasible symbols that satisfy $b^{q_B,q_B'}$ are collected in the set $\Sigma^{q_B,q_B'}$.

To take into account the initial robot state in the construction of the distance metric, in the pruned automaton we introduce an auxiliary state $q_B^{\text{aux}}$ and transitions from $q_B^{\text{aux}}$ to all initial states $q_B^0\in\ccalQ_B^0$ so that $b^{q_B^{\text{aux}},q_B^{\text{aux}}}=1$ and $b^{q_B^{\text{aux}},q_B^{0}}=\pi_0$, i.e., transition from $q_B^\text{aux}$ to $q_B^0$ can always be enabled based on the atomic predicate that is initially satisfied denoted by $\pi_0$; note that if no predicates are satisfied initially, then $\pi_0$ corresponds to the empty symbol \cite{baier2008principles}. Hereafter, the auxiliary state $q_B^{\text{aux}}$ is considered to be the initial state of the resulting NBA; see also Fig.~\ref{fig:graphG}. 

Next, we collect all NBA states that can be reached from $q_B^{\text{aux}}$ in a possibly multi-hop fashion, using a finite sequence of feasible symbols, so that once these states are reached, the robot can always remain in them as long as needed using the same symbol that allowed it to reach this state. Formally, let $\ccalD_{q_B^{\text{aux}}}$ be a set that collects all NBA states $q_B$ (i) that have a feasible self-loop, i.e., $\Sigma^{q_B,q_B}\neq\emptyset$ and (ii) for which there exists a finite and feasible word $w$, i.e., a finite sequence of feasible symbols, so that starting from $q_B^{\text{aux}}$ a finite NBA run $\rho_w$ (i.e., a finite sequence of NBA states) is incurred that ends in $q_B$ and activates the self-loop of $q_B$. In math, $\ccalD_{q_B^{\text{aux}}}$ is defined as:
\begin{align}\label{eq:reach}
    \ccalD_{q_B^{\text{aux}}}=&\{q_B\in\ccalQ_B|\\&(\Sigma^{q_B,q_B}\neq\emptyset)\wedge(\exists w~\text{s.t.}~\rho_{w}=q_B^{\text{aux}}\dots \bar{q}_B q_Bq_B)\nonumber\}.
\end{align}
By definition of $q_B^{\text{aux}}$, we have that $q_B^{\text{aux}}\in\ccalD_{q_B^{\text{aux}}}$.

Among all possible pairs of states in $\ccalD_{q_B^{\text{aux}}}$, we examine which transitions, possibly multi-hop, can be enabled using feasible symbols, so that, once these states are reached, the robot can always remain in them forever using the same symbol that allowed it to reach this state. Formally, consider any two states $q_B, q_B'\in\ccalD_{q_B^{\text{aux}}}$ (i) that are connected through a - possibly multi-hop - path in the NBA, and (ii) for which there exists a symbol, denoted by $\sigma^{q_B,q_B'}$, so that if it is repeated a finite number of times starting from $q_B$, the following finite run can be generated: 
\begin{equation}\label{eq:run}
    \rho=q_B q_B^1\dots q_B^{K-1}q_B^{K}q_B^{K},
\end{equation}
where $q_B'=q_B^K$, for some finite $K>0$. In \eqref{eq:run}, the run is defined so that (i) $q_B^k\neq q_B^{k+1}$, for all $k\in\{1,K-1\}$; (ii) $q_B^{k}\in\delta_B(q_B^k,\sigma^{q_B,q_B'})$ is not valid for all $\forall k\in\{1,\dots,K-1\}$, i.e., the robot cannot remain in any of the intermediate states (if any) that connect $q_B$ to $q_B'$ either because a feasible self-loop does not exist or because $\sigma^{q_B,q_B'}$ cannot activate this self-loop; and (iii) $q_B'\in\delta_B(q_B',\sigma^{q_B,q_B'})$ i.e., there is a feasible loop associated with $q_B'$ that is activated by $\sigma^{q_B,q_B'}$. Due to (iii), the robot can remain in $q_B'$ as long as $\sigma^{q_B,q_B'}$ is generated. The fact that the finite repetition of a \textit{single} symbol needs to generate the run \eqref{eq:run} precludes multi-hop transitions from $q_B$ to $q_B'$ that require the robot to jump from one region of interest to another one instantaneously as such transitions are not meaningful as discussed in Section~\ref{sec:problem_formulation}; see also Fig.~\ref{fig:graphG}. Hereafter, we denote the - potentially multi-hop - transition incurred due to the run \eqref{eq:run} by  $q_B'\in\delta_{B}^m(q_B,\cdot)$.

%
Then, we construct the directed graph $\ccalG=\{\ccalV,\ccalE\}$ where $\ccalV\subseteq\ccalQ_B$ is the set of nodes and $\ccalE\subseteq\ccalV\times\ccalV$ is the set of edges. The set of nodes is defined so that $\ccalV=\ccalD_{q_B^{\text{aux}}}$ and the set of edges is defined so that $(q_B,q_B')\in\ccalE$ if there exists a feasible symbol that incurs the run $\rho_w$ defined in \eqref{eq:run}; see also Fig.~\ref{fig:graphG}. 

Given the graph $\ccalG$, we define the following distance metric.

\begin{definition}[Distance Metric]
Let $\ccalG=\{\ccalV,\ccalE\}$ be the directed graph that corresponds to NBA $B$. Then, we define the distance function $d: \ccalV \times \ccalV \rightarrow \mathbb{N}$ as follows
\begin{equation}\label{eq:dist}
d(q_B,q_B')=\left\{
                \begin{array}{ll}
                  |SP_{q_B,q_B'}|, \mbox{if $SP_{q_B,q_B'}$ exists,}\\
                  \infty, ~~~~~~~~~\mbox{otherwise},
                \end{array}
              \right.
\end{equation}
where $SP_{q_B,q_B'}$ denotes the shortest path (in terms of hops) in $\ccalG$ from $q_B$ to $q_B'$ and $|SP_{q_B,q_B'}|$ stands for its cost (number of hops). 
\end{definition}

In words, $d:\ccalV\times\ccalV\rightarrow \mathbb{N}$ returns the minimum number of edges in the graph $\ccalG$ that are required to reach a state $q_B'\in\ccalV$ starting from a state $q_B\in\ccalV$. This metric can be computed using available shortest path algorithms, such the Dijkstra method with worst-case complexity $O(|\ccalE| + |\ccalV|\log|\ccalV|)$. 
%

%
Next, we define the final/accepting edges in $\ccalG$ as follows.
\begin{definition}[Final/Accepting Edges]\label{def:accEdges}
An edge $(q_B,q_B')\in\ccalE$ is called final or accepting if the corresponding multi-hop NBA transition $q_B'\in\delta_B^m(q_B,\cdot)$ includes at least one final state $q_F\in\ccalQ_F$.
\end{definition}

Based on the definition of accepting edges, we define the set $\ccalV_F\subseteq \ccalV$ that collects all states $q_B\in\ccalV$ from which an accepting edge originates, i.e.,
\begin{equation}\label{eq:accNode}
    \ccalV_F = \{q_B\in\ccalV~|~\exists~ \text{accepting edge}~(q_B,q_B')\in\ccalE \}.
\end{equation}
By definition of the accepting condition of the NBA, we have that if at least one of the accepting edges is traversed infinitely often, then the corresponding LTL formula is satisfied. 

Similar to \cite{bisoffi2018hybrid}, we define the distance of any state $q_B\in\ccalV$ to the set $\ccalV_F\subseteq \ccalV$ as
\begin{equation}\label{eq:distF}
d_F(q_B,\ccalV_F)=\min_{q_B'\in\ccalV_F}d(q_B,q_B'),
\end{equation}
where $d(q_B,q_B')$ is defined in \eqref{eq:dist} and $\ccalV_F$ is defined in \eqref{eq:accNode}; see also Fig.~\ref{fig:graphG}.

\subsection{Online Symbolic Controller}
\label{appendix:planning}

In this Section, we present how manipulation commands are generated online. The proposed controller requires as an input the graph $\ccalG$ defined in Appendix~\ref{sec:dist}. The main idea is that as the robot navigates the unknown environment, it selects NBA states that it should visit next so that the distance to the final states, as per \eqref{eq:distF}, decreases over time. 

Let $q_B(t)\in\ccalV$ be the NBA state that the robot has reached after navigating the unknown environment for $t$ time units. At time $t=0$, $q_B(t)$ is selected to be the initial NBA state. Given the current NBA state $q_B(t)$, the robot selects a new NBA state, denoted by $q_B^{\text{next}}\in\ccalV$ that it should reach next to make progress towards accomplishing their task. This state is selected among the neighbors of $q_B(t)$ in the graph $\ccalG$ based on the following two cases.
If $q_B(t)\notin \ccalV_F$, where $\ccalV_F$ is defined in \eqref{eq:accNode}, then among all neighboring nodes, we select one that satisfies
\begin{equation}\label{eq:minDist}
d_F(q_B^{\text{next}},\ccalV_F) =  d_F(q_B(t),\ccalV_F)-1,  
\end{equation}
i.e., a state that is one hop closer to the set $\ccalV_F$ than $q_B(t)$ is where $d_F$ is defined in \eqref{eq:distF}.  Under this policy of selecting $q_B^{\text{next}}$, we have that eventually $q_B(t)\in\ccalV_F$; controlling the robot to ensure this property is discussed in Section~\ref{sec:reactive_planner}. If $q_B(t)\in\ccalV_F$, then the state $q_B^{\text{next}}$ is selected so that $(q_B(t),q_B^{\text{next}})$ is an accepting edge as per Definition~\ref{def:accEdges}. This way we ensure that accepting edges are traversed infinitely often and, therefore, the assigned LTL task is satisfied.

Given the selected state $q_B^{\text{next}}$, a feasible symbol is selected that can enable the transition from $q_B(t)$ to $q_B^{\text{next}}$, i.e., can incur the run \eqref{eq:run}. By definition of the run in \eqref{eq:run}, it suffices to select a symbol that satisfies the following Boolean formula:
\begin{equation}\label{eq:b}
   b^{q_B,q_B'}=b^{q_B,q_B^1}\wedge b^{q_B^2,q_B^3}\wedge\dots b^{q_B^{K-1},q_B^{K}}\wedge b^{q_B^{K},q_B^{K}},
\end{equation}
where $q_B^{K} = q_B^{\text{next}}$. In words, the Boolean formula in \eqref{eq:b} is the conjunction of all Boolean formulas $b^{q_B^{k-1},q_B^{k}}$ that need to be satisfied simultaneously to reach $q_B^{\text{next}}=q_B^K$ through a multi-hop path. Once such a symbol is generated, a point-to-point navigation and manipulation command is accordingly generated. For instance, if this symbol is $\pi^{a_k(\movableobjectdilated_i,\ell_j)}$ then the robot has to
apply the action $a_k(\movableobjectdilated_i,\ell_j)$, i.e.,
go to a known region of interest $\ell_j$ and apply action $a_k$ to the movable object $\movableobjectdilated_i$. The online implementation of such action is discussed in Section~\ref{sec:reactive_planner}. 

\subsection{Completeness of the Symbolic Controller}
\label{appendix:completeness}
In what follows, we provide the proof of Proposition~\ref{proposition:completeness}.

\begin{proof}[Proof of Proposition~\ref{proposition:completeness}]
To show this result it suffices to show that eventually the accepting condition of the NBA is satisfied, i.e., the robot will visit at least one of the final NBA states $q_F$ infinitely often. Equivalently, as discussed in Appendix~\ref{sec:dist}, it suffices to show that accepting edges $(q_B,q_B')\in\ccalE$, where $q_B,q_B'\in\ccalV$ are traversed infinitely often. 

First, consider an infinite sequence of time instants $\bbt=t_0,t_1,\dots,t_k,\dots$ where $t_{k+1}\geq t_k$, so that an edge in $\ccalG$, defined in Appendix~\ref{sec:dist}, is traversed at every time instant $t_k$. Let $e(t_k)\in\ccalE$ denote the edge that is traversed at time $t_k$. Thus, $\bbt$ yields the following sequence of edges $\bbe=e(t_0),e(t_1),\dots,e(t_k)\dots$ where $e(t_k)=(q_B(t_{k}),q_B(t_{k+1}))$, $q_B(t_0)=q_B^{\text{aux}}$, $q_B(t_k)\in\ccalV$, and the state $q_B^{k+1}$ is defined based on the following two cases. 
If $q_B(t_k)\notin \ccalV_F$, then the state $q_B(t_{k+1})$ is closer to $\ccalV_F$ than $q_B(t_k)$ is, i.e., $d_F(q_B(t_{k+1}),\ccalV_F)=d_F(q_B(t_{k}),\ccalV_F)-1$, where $d_F$ is defined in \eqref{eq:distF}. If $q_B(t_{k})\in\ccalV_F$, then $q_B(t_{k+1})$  is selected so that an accepting edge originating from $q_B(t_{k})$ is traversed. By definition of $q_B(t_{k})$, the `distance' to $\ccalV_F$ decreases as $t_k$ increases, i.e., given any time instant $t_k$, there exists a time instant $t_k'\geq t_k$ so that $q_B(t_{k}')\in\ccalV_F$ and then at the next time instant an accepting edge is traversed. This means that $\bbe$ includes an infinite number of accepting edges. This sequence $\bbe$ exists since, by assumption, there exists an infinite sequence of manipulation actions that satisfies $\phi$. Particularly, recall that by construction of the graph $\ccalG$, the set of edges in this graph captures all NBA transitions besides those that (i) require the robot to be in more than one region simultaneously or (ii) multi-hop NBA transitions that require the robot to jump instantaneously from one region of interest which are not meaningful in practice. As a result, if there does not exist at least one sequence $\bbe$, i.e., at least one infinite path in $\ccalG$ that starts from the initial state and traverses at least one accepting edge infinitely often, then this means that there is no path that satisfies $\phi$ (unless conditions (i)-(ii) mentioned before are violated).  

Assume that the discrete controller selects NBA states as discussed in Appendix~\ref{appendix:planning}. To show that the discrete controller is complete, it suffices to show that it can generate a infinite sequence of edges $\bbe$ as defined before. Note that the discrete controller selects next NBA states that the robot should reach in the same way as discussed before. Also, by assumption, the environmental structure and the continuous-time controller ensure that at least one of the candidate next NBA states (i.e., the ones that can decrease the distance to $\ccalV_F$) can be reached. Based on these two observations, we conclude that such a sequence $\bbe$ will be generated, 
completing the proof.
\end{proof}


Note that the graph $\ccalG$ is agnostic to the structure of the environment, meaning that an edge in $\ccalG$ may not be able to be traversed. For instance, consider an edge in this graph that is enabled only if the robot applies a certain action to a movable object that is in a region blocked by fixed obstacles; in this case the continuous-time controller will not be able to execute this action due to the environmental structure. Satisfaction of the second assumption in Proposition~\ref{proposition:completeness} implies that if such scenarios never happen, (e.g., all regions and objects that the robot needs to interact with are accessible and the continuous-time controller allows the robot to reach them) then the proposed hybrid control method will satisfy the assigned LTL task if this formula is feasible. However, if the second assumption does not hold,  there may  be an alternative sequence of automaton states to follow in order to satisfy the LTL formula that the proposed algorithm failed to find due to the \`a-priori unknown structure of the environment.

\section{Reactive Controller Overview}
\label{appendix:reactive_controller_overview}

This Appendix provides a brief description of the reactive, vector field controller from \cite{vasilopoulos_pavlakos_bowman_caporale_daniilidis_pappas_koditschek_2020} used in this work. As shown in Fig.~\ref{fig:simulation_simple}, the robot navigates the physical space and discovers obstacles (e.g., using the semantic mapping engine in \cite{Bowman2017}), which are dilated by the robot radius and stored in the semantic space. Potentially overlapping obstacles in the semantic space are subsequently consolidated in real time to form the mapped space. A change of coordinates $\diffeogeneric$ from this space is then employed to construct a geometrically simplified (but topologically equivalent) model space, by merging familiar obstacles overlapping with the boundary of the enclosing freespace to this boundary, deforming other familiar obstacles to disks, and leaving unknown obstacles intact. It is shown in \cite{vasilopoulos_pavlakos_bowman_caporale_daniilidis_pappas_koditschek_2020} that the constructed change of coordinates $\diffeo$ between the mapped and the model space, for a given index set $\hybridmode$ of instantiated familiar obstacles, is a $C^\infty$ diffeomorphism away from sharp corners. Using the diffeomorphism $\diffeo$, we construct a hybrid vector field controller (with the modes indexed by $\hybridmode$, i.e., depending on external perceptual updates), that guarantees simultaneous obstacle avoidance and target convergence, while respecting input command limits, in unexplored semantic environments (see \cite[Eq. (1)]{vasilopoulos_pavlakos_bowman_caporale_daniilidis_pappas_koditschek_2020}). The reactive controller is further extended to accommodate differential-drive robot dynamics, while maintaining the same formal guarantees.

\section{Description of the Topology Checking Algorithm}
\label{appendix:topology_checking_algorithm}

This Appendix includes an algorithmic outline of the topology checking algorithm from Section~\ref{subsec:topology_checking}, shown in Algorithm~\ref{algorithm:topology_checking}. This algorithm works as follows. Starting with the initially assumed polygonal enclosing freespace $\enclosingfreespace$ for either the robot or the robot-object pair, we subtract the union of all known dilated movable objects in $\movableobjectsetdilated$ and fixed obstacles in $\knownobstaclesetdilated_\hybridmode$ (corresponding to the index set $\hybridmode$ of localized familiar obstacles), using standard logic operations with polygons (see e.g., \cite{elgindy_1993,clementini_1993,douglas_1973}). This operation results in a list of freespace components, which we denote by $\mathcal{L}_\freespace := (\freespace_1, \freespace_2, \ldots)$. From this list, we identify the freespace $\freespace$ as the freespace component $\freespace_k$ that contains the robot position $\robotposition$ (or the robot-object pair center $\robotposition_{i,c}$) and re-define the enclosing freespace as its convex hull, i.e., $\enclosingfreespace := \text{Conv}\left(\overline{\freespace}_k\right)$.

If the goal $\goalposition$ is contained in $\enclosingfreespace$, the reactive controller proceeds as usual, using $\enclosingfreespace$ for the diffeomorphism construction (see Appendix~\ref{appendix:reactive_controller_overview}), and treating all other freespace components as obstacles. Otherwise, we need to check whether movable objects or fixed obstacles cause a freespace disconnection that does not allow for successful action completion. Namely, we need to check whether both the robot position $\robotposition$ (or the robot-object pair center $\robotposition_{i,c}$) and the target $\goalposition$ are included in the same connected component of the set $\mathcal{L}_{\freespace+\movableobjectsetdilated} := \left(\bigcup_i \freespace_i\right) \cup \left(\bigcup_j \movableobjectdilated_j\right)$, i.e., the union of all freespace components in $\mathcal{L}_\freespace$ with all dilated movable objects in $\movableobjectsetdilated$. This would imply that a subset of movable objects in $\movableobjectsetdilated$ blocks the target configuration. In that case, the robot switches to the Fix mode to rearrange these objects; otherwise, the interface layer reports to the symbolic controller that the current action is infeasible.

In the former case, we proceed one step further to identify the blocking movable objects in order to reconfigure them on-the-fly. First, we isolate the connected components of the union of all movable objects in $\movableobjectsetdilated$ into a list $\mathcal{L}_{\movableobjectsetdilated} := (\movableobjectsetdilated_1, \movableobjectsetdilated_2, \ldots)$; we refer to the elements of that list as the {\it movable object clusters}. Assuming that each movable object cluster is connected to at most two freespace components from $\mathcal{L}_\freespace$, we build a connectivity tree rooted at the robot's (or the robot-object pair's) freespace $\freespace$, by checking whether the closures of two individual regions overlap; the tree's vertices are geometric regions (freespace components in $\mathcal{L}_\freespace$ and movable object clusters in $\mathcal{L}_{\movableobjectsetdilated}$) and edges denote adjacency. We then backtrack from the vertex of the tree that contains the goal $\goalposition$ until we reach the root, saving the encountered movable object clusters along the way. Any movable object intersecting any of these clusters is pushed to a stack of {\it blocking movable objects} $\mathcal{B}_{\movableobjectsetdilated}$.

\begin{algorithm}
\begin{algorithmic}
\Function{TopologyChecking}{$\robotposition$,$\goalposition$,$\enclosingfreespace$,$\movableobjectsetdilated$,$\knownobstaclesetdilated_\hybridmode$}
\State $\mathcal{L}_\freespace \gets \texttt{Subtract}(\enclosingfreespace,\texttt{Union}(\movableobjectsetdilated,\knownobstaclesetdilated_\hybridmode$))
\For{$\freespace_k \in \mathcal{L}_\freespace$}
\If{$\robotposition \in \freespace_k$}
\State $\freespace \gets \freespace_k$
\State $\enclosingfreespace \gets \text{Conv}(\overline{\freespace}_k)$
\State $\textbf{break}$
\EndIf
\EndFor
\If{$\goalposition \in \freespace$}
\State $\mathcal{B}_\movableobjectsetdilated \gets \varnothing$ \Comment{No blocking objects or obstacles}
\State $\texttt{IsFeasible} \gets \texttt{True}$ \Comment{Task feasible}
\Else
\State $\mathcal{L}_{\freespace+\movableobjectsetdilated} \gets (\bigcup_i \freespace_i) \cup (\bigcup_j \movableobjectdilated_j), \mathcal{F}_i \in \mathcal{L}_\freespace, \movableobjectdilated_j \in \movableobjectsetdilated$
\For{$\freespace_k \in \mathcal{L}_{\freespace+\movableobjectsetdilated}$}
\If{$\robotposition \in \freespace_k$}
\If{$\goalposition \in \freespace_k$}
\State $\mathcal{L}_\movableobjectsetdilated \gets \bigcup_j \movableobjectdilated_j, \movableobjectdilated_j \in \movableobjectsetdilated$
\State $(\mathcal{V}_\mathcal{L},\mathcal{E}_\mathcal{L}) \gets \texttt{ConnectTree}(\mathcal{L}_\freespace,\mathcal{L}_\movableobjectsetdilated)$
\For{$V \in \mathcal{V}_\mathcal{L}$}
\If{$\goalposition \in V$}
\State $\mathcal{B}_\movableobjectsetdilated \gets \texttt{BacktrackFrom}(V)$
\State $\textbf{break}$
\EndIf
\EndFor
\State $\texttt{IsFeasible} \gets \texttt{True}$
\Else
\State $\mathcal{B}_\movableobjectsetdilated \gets \varnothing$ \Comment{Blocked by fixed obstacles}
\State $\texttt{IsFeasible} \gets \texttt{False}$
\EndIf
\State $\textbf{break}$
\EndIf
\EndFor
\EndIf
\State $\textbf{return} \quad \enclosingfreespace, \texttt{IsFeasible}, \mathcal{B}_\movableobjectsetdilated$
\EndFunction
\end{algorithmic}
\caption{Topology Checking Algorithm.} \label{algorithm:topology_checking}
\end{algorithm}





\bibliographystyle{IEEEtran}
\bibliography{IEEEabrv,references, YK_bib}

\end{document}